\newtheorem{dfn}{Definition}
\newtheorem{thm}[dfn]{Theorem}
\newtheorem{prob}[dfn]{Problem}
\newtheorem{rem}[dfn]{Remark}
\newcommand{\bR} { {\mathbb R}}
\algnewcommand{\Initialize}[1]{%
  \State \textbf{Initialize:}
  \State \hspace*{\algorithmicindent}\parbox[t]{0.8\linewidth}{\raggedright #1}
}
\def\red{\hfill $\lhd$}
\title{
Learning Stabilizable Deep Dynamics Models
}
\author{Kenji~Kashima,
        Ryota Yoshiuchi, 
        Yu~Kawano
\thanks{K.~Kashima and R.~Yoshiuchi are with the Graduate School of Informatics,
 Kyoto University, Kyoto, 606-8501,  Japan
  {\tt\small (kk@i.kyoto-u.ac.jp, ryota.yoshiuchi.73r@st.kyoto-u.ac.jp)}}
\thanks{Y.~Kawano is with the Graduate School of Advanced Science and Engineering, 
Hiroshima University, Higashi-Hiroshima, Japan
 {\tt\small (ykawano@hiroshima-u.ac.jp)}}
\thanks{This work was supported by JSPS KAKENHI Grant Number JP21H04875}
}
\begin{document}
\allowdisplaybreaks[4]

\maketitle
\thispagestyle{empty}
\pagestyle{empty}

\begin{abstract}
When neural networks are used to model dynamics, properties
such as stability of the dynamics are generally not guaranteed.
In contrast, there is a recent method for learning the dynamics of
autonomous systems that guarantees global exponential stability
using neural networks. In this paper, we propose a new method
for learning the dynamics of input-affine control systems. An
important feature is that a stabilizing controller and control
Lyapunov function of the learned model are obtained as well.
Moreover, the proposed method can also be applied to solving
Hamilton-Jacobi inequalities. The usefulness of the proposed
method is examined through numerical examples.
\end{abstract}

\begin{IEEEkeywords}
Stabilizable systems, control Lyapunov functions, system identification, deep learning
\end{IEEEkeywords}

\section{Introduction}
Machine learning tools such as neural networks (NNs) are becoming ones of the standard tools for modeling control systems. 
However, as a general problem, system properties such as stability, controllability, and stabilizability are not inherited by learned models.
In other words, there still remains a question: how to implement pre-known systems properties as prior information of learning. 
In this context, there are recent approaches to learning stable \emph{autonomous} dynamics by NNs \cite{chang2020neural,manek2020learning}.
In \cite{chang2020neural}, a term forcing stability has been included in a loss function, and 
states not satisfying a stability condition have been added to learning data in each iteration.
In \cite{manek2020learning}, an NN parametrization of stable system dynamics has been proposed by 
simultaneously modeling system dynamics and a Lyapounov function.
By this approach, it has been explicitly guaranteed that the origin of modeled system dynamics are globally exponentially stable 
for all possible NN parameters.

In this paper, beyond autonomous dynamics, we consider \emph{control} dynamics and provide an NN parametrization of a stabilizable drift vector field when an input vector field is given.
The proposed parametrization theoretically guarantees that the learned drift vector field is stabilizable.
Moreover, as a byproduct of the proposed approach, we can learn not only a drift vector field but also 
a stabilizing controller and a Lyapuonv function of the closed-loop system, i.e., a control Lyapunov function~\cite{sontag1989universal}.
These are utilized to analyze when the true system is stabilized by the learned controller.

The proposed learning method has freedom for tuning parameters, which can be available to solve various control problems 
in addition to learning a stabilizable drift vector field.
This is illustrated by applying our method for solving nonlinear $H_\infty$-control problems and Hamilton-Jacobi inequalities (HJIs). This further suggests a way to modify a loss function for learning in order to solve
Hamilton-Jacobi equations.
In addition, establishing a bridge between our approach and HJIs gives a new look at the conventional method \cite{manek2020learning} for 
stable autonomous dynamics. 
From an inverse optimal control perspective \cite{krstic1998stabilization}, it is possible to show that the learning formula by \cite{manek2020learning} is optimal in some sense. This is also true for our method.

In a related attempt, modeling based on the Hammerstein-Wiener model has been done in \cite{moriyasu2021structured}. However, its internal dynamics have been limited to linear, and the model has not been structurally guaranteed to be stabilizable.
Variances of \cite{chang2020neural,manek2020learning} are found for studying different classes of stable autonomous dynamics 
such as stochastic systems~\cite{lawrence2020almost,urain2020imitationflow}, monotone systems~\cite{wang2020deep}, time-delay systems~\cite{schlaginhaufen2021learning}, and systems admitting positively invariant sets~\cite{takeishi2020learning}.
However, none of them considers control design.
Differently from data-driven control for nonlinear systems, e.g.,  \cite{alsalti2021data,rueda2020data}, a stabilizable drift vector field, stabilizing controller, and Lyapunov function are learned at once, only by specifying them into NNs which have a lot of flexibility to describe nonlinearity.

The remainder of this paper is organized as follows. 
In Section~\ref{pre:sec}, the learning problem of stabilizable unknown drift vector fields is formally stated.
Then, as a preliminary step, we review the conventional work \cite{manek2020learning} for learning stable autonomous systems.
In Section~\ref{main:sec}, as the main result, we present a novel method for simultaneously learning stabilizable dynamics,
a stabilizing controller, and a Lyapuonv function of the closed-loop system,
which are further exemplified in Section~\ref{ex:sec}.
Concluding remarks are given in Section~\ref{con:sec}.

{\it Notation:}
Let $\bR$ and $\bR_+$ be the field of real numbers and the set of non-negative real numbers, respectively.
For a vector or a matrix, $\| \cdot \|$ denotes its Euclidean norm or its induced Euclidean norm, respectively. 
For a continuously differentiable function $V:\bR^n \to \bR$, 
the row vector-valued function consisting of its partial derivatives is denoted by
$(\partial V/\partial x) (x) := [\begin{matrix} \partial V/\partial x_1 & \cdots & \partial V/\partial x_n \end{matrix}](x)$.
Similarly, the column vector-valued function is denoted by $ \nabla V (x) :=(\partial V/\partial x)^\top (x)$.
Furthermore, its Lie derivative along a vector field $f:\bR^n \to \bR^n$ is denoted by 
$L_f V(x) := ((\partial V/\partial x) f) (x)$.
More generally,  $L_g V(x) := ((\partial V/\partial x) g) (x)$ for a matrix-valued function $g:\bR^n \to \bR^{n \times m}$.

\section{Preliminaries}\label{pre:sec}
\subsection{Problem Formulation}
Consider input-affine nonlinear systems, described by
\begin{align}\label{sys}
\dot x = f (x) + g(x) u,
\end{align}
where $f:\bR^n \to \bR^n$ and $g:\bR^n \to \bR^{n \times m}$ are 
locally Lipschitz continuous, and $f(0) = 0$. 

Our goal in this paper is to design a stabilizing controller $u = \alpha (x)$ for 
the system~\eqref{sys} when the drift vector field $f(x)$ is unknown, stated below.
\begin{prob}\label{main:prob}
For the system \eqref{sys}, suppose that 
\begin{itemize}
\item $f$ is unknown, and $g$ is known;
\item  for some input-data $\{x^{(i)}, u^{(i)}\}_{i=1}^{N_d}$, 
the corresponding output data $\{\dot x^{(i)}\}_{i=1}^{N_d}$ are available. 
\end{itemize}
From those available data, learn a stabilizing controller $u = \alpha (x)$ together with 
the drift vector field $f(x)$.
\red
\end{prob}

In Problem \ref{main:prob}, we assume that $\dot x$ is measurable,
which is not an essential requirement. 
If $\{x^{(i)}\}_{i=1}^{N_d}$ is sampled evenly, one can apply a differential approximation method for computing $\dot x^{(i)}$ \cite{chartrand2011numerical}.
In the uneven case, one can utilize the adjoint method  \cite{chen2018neural}.

To guarantee the solvability of the problem,
we suppose that the system~\eqref{sys} is stabilizable in the following sense.
\begin{dfn}\label{stab:dfn}
The system \eqref{sys} is said to be (globally) \emph{stabilizable} if
there exist a scalar-valued function $V: \bR^n \to \bR_+$ and 
a locally Lipschitz continuous function $\alpha: \bR^n \to \bR^m$
such that
\begin{enumerate}
\item $V$ is continuously differentiable;
\item $V$ is positive definite on $\bR^n$, i.e.,  $V(x) \ge 0$ for all $x \in \bR^n$, and  $V(0)=0$ if and only if $x = 0$;
\item $V$ is radially unbounded, i.e., $V(x) \to \infty$ as $\| x \| \to \infty$;
\item it follows that
\begin{align}\label{stab:cond}
L_{f + g \alpha} V(x) < 0
\end{align}
for all $x \in \bR^n \setminus \{ 0\}$.
\red
\end{enumerate}
\end{dfn}

The function $V$ is nothing but a Lyapunov function of the closed-loop system $\dot x = f(x) + g(x) \alpha (x)$, which guarantees 
the global asymptotical stability (GAS) at the origin.
In other words, $V$ is a control Lyapunov function (CLF)~\cite{sontag1989universal}.
If a CLF is found, it is known that one can construct the following Sontag-type stabilizing controller~\cite{sontag1989universal}:
\begin{align}\label{eq:Sontag}
    &u(x) = \alpha_s (x) \\
    &\quad \alpha_s:=\left\{\begin{array}{ll}
    0 & \mbox{if } L_g V = 0 \\
    -\frac{L_f V+\sqrt{\|L_f V\|^2+\|L_g V\|^4}}{\|L_g V\|^2} L_g^\top V
    & \mbox{otherwise}
    \end{array}\right.
    \nonumber
\end{align}
This is one of the well known controllers for nonlinear control and is investigated from the various aspect such as the inverse optimality; see. e.g., \cite{krstic1998stabilization}.

In this paper, we simultaneously learn a drift vector field $f$, CLF $V$, and stabilizing controller $\alpha$.
One can further construct the Sontag-type controller \eqref{eq:Sontag} from the CLF and employ it instead of learned $\alpha (x)$.

\subsection{Learning stable autonomous dynamics}
A neural network (NN) algorithm for learning stable autonomous dynamics has been proposed by \cite{manek2020learning}.
An important feature of this algorithm is that the global exponential stability (GES) of the learned dynamics 
is guaranteed theoretically.
In this subsection, we summarize this algorithm as a preliminary step of solving Problem~\ref{main:prob}.

Consider the following autonomous systems:
\begin{align}\label{asys}
\dot x = f(x).
\end{align}
Suppose that the origin is GES. Then, it is expected that
there exists a Lyapunov function $V: \bR^n \to \bR_+$ satisfying the following three:
\begin{enumerate}
\item $V$ is continuously differentiable;
\item there exist $c_1, c_2 > 0$ such that $c_1 \| x \|^2 \le V(x) \le c_2 \| x \|^2$ for all $x \in \bR^n$;
\item there exists $c_3 > 0$ such that $L_f V (x) \le  - c_3 V(x)$ for all $x \in \bR^n$.
\end{enumerate}
This is true if $\partial f/\partial x$ is continuous and bounded on $\bR^n$; see, e.g., \cite[Theorem 4.14]{khalil2002nonlinear}.

To learn unknown stable dynamics \eqref{asys} by deep learning, we introduce two NNs.
Let $\hat f:= \hat f\mbox{--NN}_{w_{\hat f}, v_{\hat f}, b_{\hat f}} :\bR^n \to \bR^n$ and $V:=V\mbox{--NN}_{w_V, v_V, b_V}:\bR^n \to \bR_+$ 
denote NNs corresponding to a \emph{nominal} drift vector field and Lyapunov function, respectively.
By nominal, we emphasize that $\hat f$ itself does not represent learned stable dynamics, and $f$ is learned as a projection of $\hat f$ onto a set of stable dynamics.

First, we specify the structure of $V$ such that items 1) and 2) hold for arbitrary parameters of the NN.
Define
\begin{align}\label{V}
V(x) :=  \sigma_k ( \gamma (x) - \gamma (0) ) + \varepsilon \| x\|^2,
\end{align}
where $\varepsilon >0$ is given.
The function $\gamma : \bR^n \to \bR$ is an input-convex neural network (ICNN) \cite{amos2017input}, described by
\begin{align} \label{ICNN}
\left\{\begin{array}{l}
z_1 := \sigma_0 (w_0 x + b_0) \\
z_{i+1} := \sigma_i(v_i z_i + w_i x + b_i), \; i=1,\dots,k-1\\
\gamma (x) := z_k,
\end{array}\right.
\end{align}
where $w_i \in \bR^n$, $i = 0,1,\dots, k-1$ and 
$v_j > 0$, $j = 1, \dots, k-1$ represent the weights of the mappings from
$x$ to the $i+1$th layer and from $z_j$ to the $j+1$th layer, respectively, and 
$b_i \in \bR$, $i=0,1,\dots, k-1$ represent the bias functions of the $i$th layer.
Finally, the activate functions $\sigma_i :\bR \to \bR_+$, $i=0,1,\dots, k$ are the following smooth ReLU functions:
\begin{align}\label{sReLU}
\sigma_i (y_i) :=
    \left\{\begin{array}{ll}
    0& \mbox{if } y_i\leq 0 \\
    y_i^2/2 d_i & \mbox{if } 0 < y_i < d_i \\
    y_i - d_i /2 & \mbox{otherwise}
    \end{array}\right.
\end{align}
for some fixed $d_i > 0$, $i = 0,1, \dots, k$.
It has been shown by \cite[Theorem 1]{manek2020learning} that
$V$ constructed by \eqref{V}--\eqref{sReLU} satisfies items 1) and 2) on $\bR^n$ for 
arbitrary parameters $w_i$, $v_i> 0$, $b_i$, and $d_i > 0$.

Next, we consider item 3). 
Let $\hat f$ be locally Lipschitz continuous on $\bR^n$ and satisfy $\hat f (0) = 0$.
One can confirm that the following $f$ satisfies item 3) for arbitrary $\hat f$ and $V$:
\begin{align}\label{learn_sys}
f(x) &:= \hat f(x) + \hat k (x) \\
&\hat k :=
\left\{\begin{array}{ll}
0 & \mbox{if } L_{\hat f} V  \le - c_3 V \\
- \frac{L_{\hat f} V + c_3 V}{\| \nabla V\|^2} \nabla V & \mbox{otherwise}
\end{array}\right..
\nonumber
\end{align}
Since $\nabla V$ is locally Lipschitz continuous, and $\nabla V(x) = 0$ if and only if $x = 0$,
$f$ is locally Lipschitz continuous on $\bR^n \setminus \{0\}$, and $f(0) = 0$;
this has not been explicitly mentioned by \cite[Theorem 1]{manek2020learning}.

Finally, to learn $f$ that fits to data $\{x^{(i)}, \dot x^{(i)}\}_{i=1}^{N_d}$, 
we use the following loss function, for some $\Delta \subset \{1,\dots,N_d \}$,
\begin{align}\label{loss}
L =  \frac{1}{n |\Delta|} \sum_{i\in\Delta} \| \dot x^{(i)} - f (x^{(i)}) \|^2,
\end{align}
where $| \Delta |$ is the cardinality of $\Delta$.
The learning algorithm is summarized in Algorithm \ref{alg:sys}, where
we use the following compact description of \eqref{learn_sys} although the definition at the origin becomes vague:
\begin{align}\label{learn_sys2}
f(x) &:= \hat f(x)  - \frac{\mbox{ReLU}( L_{\hat f} V(x) + c_3 V(x))}{\| \nabla V (x)\|^2} \nabla V(x),
\end{align}
where
\begin{align}
\mbox{ReLU}(y) :=
\left\{\begin{array}{ll}
0 & \mbox{if } y \le 0 \\
y & \mbox{otherwise}
\end{array}\right..
\end{align}

\begin{algorithm}[htbp]
  \caption{Learning Deep Stable Dynamics}
  \label{alg:sys}
  \begin{algorithmic}
    \Require{$c_3>0$, $\varepsilon > 0$, $\theta \in (0, 1)$, $\{x^{(i)}, \dot{x}^{(i)}\}_{i=1}^{N_d}$}
    \Ensure{$f$ and $V$}
    \Initialize{$\hat f:=\hat f\mbox{--NN}_{w_{\hat f}, v_{\hat f}, b_{\hat f}}$\\
    $V:= V\mbox{--NN}_{w_V, v_V, b_V}$\\
    $f:= \hat f - \frac{\mathrm{ReLU}(L_{\hat f} V + c_3 V)}{\| \nabla V\|^2} \nabla V$}
    \Repeat
        \State select $\Delta \subset \{1,\dots,N_d \}$
        \State $L  \leftarrow \frac{1}{n |\Delta |} \sum_{i\in\Delta} \| \dot x^{(i)} - f (x^{(i)}) \|^2$
        \State $w_i \leftarrow w_i + \theta (\partial L/\partial w_i)^\top$, $i=\hat f, V$
        \State $v_i \leftarrow v_i + \theta (\partial L/\partial v_i)^\top$, $i=\hat f, V$
        \State $b_i \leftarrow b_i + \theta (\partial L/\partial b_i)^\top$, $i=\hat f, V$
    \Until convergence
    \State return $f$ and $V$
  \end{algorithmic}
\end{algorithm}

\section{Learning stabilizing controllers}\label{main:sec}
\subsection{Main results}
Inspired by Algorithm \ref{alg:sys}, we present an algorithm for solving Problem~\ref{main:prob}.
Our approach is to learn a drift vector field $f$ and a controller $u = \alpha (x)$ such that 
the GAS of the closed-loop system $\dot x = f(x) + g(x) \alpha (x)$ is guaranteed theoretically.

To this end, we again employ $\hat f:= \hat f\mbox{--NN}_{w_{\hat f}, v_{\hat f}, b_{\hat f}}$ and $V:=V\mbox{--NN}_{w_V, v_V, b_V}$ and 
newly introduce an NN representing a controller, $\alpha := \alpha\mbox{--NN}_{w_\alpha, v_\alpha, b_\alpha}:\bR^n \to \bR^m$.
Recall that $V$ constructed by \eqref{V} -- \eqref{sReLU} satisfies  items 1) -- 3) of Definition \ref{stab:dfn}.
Therefore, the remaining requirement is item 4), which holds for arbitrary $\hat f$, $V$, and $\alpha$ if 
$f$ is learned by
\begin{align}\label{learn_CLF}
f(x) &:= \hat f(x) + \hat \ell (x) \\
&\hat \ell :=
\left\{\begin{array}{ll}
0 & \mbox{if } L_{\hat f + g \alpha } V  \le - W \\
- \frac{L_{\hat f + g \alpha} V + W}{\| \nabla V\|^2} \nabla V & \mbox{otherwise}
\end{array}\right.,
\nonumber
\end{align}
where $W:\bR^n \to \bR_+$ is a given locally Lipschitz continuous positive definite function.
The formula \eqref{learn_CLF} can be viewed as a projection of $\hat f$ onto a \emph{stabilizable} drift vector field for given $g$.
Indeed, $u= \alpha (x)$ stabillizes the learned $f$, stated below.

\begin{thm}\label{thm:CLF}
Consider $V=V\mbox{--NN}_{w_V, v_V, b_V}$ constructed by \eqref{V} -- \eqref{sReLU},
and locally Lipschitz continuous $\hat f= \hat f\mbox{--NN}_{w_{\hat f}, v_{\hat f}, b_{\hat f}}$ and 
$\alpha= \alpha\mbox{--NN}_{w_\alpha, v_\alpha, b_\alpha}$
such that $\hat f(0) = 0$ and $\alpha (0) = 0$.
Also, let $W$ be locally Lipschitz continuous and positive definite.
Then, for $f$ in \eqref{learn_CLF}, the closed-loop system $\dot x = f(x) + g(x) \alpha (x)$ is 
GAS (GES if $W(x) = c_3 V(x)$, $c_3 >0$) at the origin.
\end{thm}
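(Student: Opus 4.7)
The plan is to verify that the learned $V$ and the learned closed-loop dynamics $f + g\alpha$ together satisfy the four conditions of Definition~\ref{stab:dfn}, at which point the GAS claim follows from a standard Lyapunov argument, and the GES specialization follows from the quadratic sandwich bound on $V$.

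First I would note that conditions 1)--3) of Definition~\ref{stab:dfn} are automatic from the construction of $V$ via \eqref{V}--\eqref{sReLU}: by \cite[Theorem 1]{manek2020learning} there exist $c_1,c_2>0$ with $c_1\|x\|^2 \le V(x)\le c_2\|x\|^2$, so $V$ is $C^1$, positive definite, and radially unbounded. The substantive content is condition 4), which I would establish by a direct case analysis on the definition \eqref{learn_CLF} of $\hat\ell$. Using $L_{f+g\alpha}V = L_{\hat f + g\alpha}V + (\partial V/\partial x)\hat\ell$, in the first branch ($L_{\hat f+g\alpha}V \le -W$) one has $\hat\ell=0$ and hence $L_{f+g\alpha}V = L_{\hat f+g\alpha}V \le -W(x)$. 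In the second branch, the identity $(\partial V/\partial x)\nabla V = \|\nabla V\|^2$ collapses the correction term to
\begin{equation*}
\tfrac{\partial V}{\partial x}\hat\ell \;=\; -\frac{L_{\hat f+g\alpha}V + W}{\|\nabla V\|^2}\,\|\nabla V\|^2 \;=\; -\bigl(L_{\hat f+g\alpha}V + W\bigr),
\end{equation*}
so $L_{f+g\alpha}V = -W(x)$. Combining the branches gives $L_{f+g\alpha}V(x) \le -W(x) < 0$ for every $x\neq 0$, since $W$ is positive definite, which is exactly condition 4).

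With all four conditions of Definition~\ref{stab:dfn} in hand, the classical Lyapunov theorem (e.g.\ \cite[Theorem 4.2]{khalil2002nonlinear}) yields GAS of the origin for $\dot x = f(x) + g(x)\alpha(x)$. For the GES specialization, substituting $W = c_3 V$ gives $\dot V \le -c_3 V$ along closed-loop trajectories, hence $V(x(t))\le V(x(0))e^{-c_3 t}$; combining with the quadratic sandwich $c_1\|x\|^2 \le V \le c_2\|x\|^2$ yields $\|x(t)\|\le \sqrt{c_2/c_1}\,\|x(0)\|\,e^{-c_3 t/2}$, which is GES.

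The part I expect to be most delicate, and which I would treat carefully before invoking the Lyapunov theorem, is the regularity of $f$ at the origin: the denominator $\|\nabla V\|^2$ in \eqref{learn_CLF} vanishes only at $x=0$ (since $V$ is strictly convex-like away from the origin by the ICNN plus $\varepsilon\|x\|^2$ construction), so $\hat\ell$ is locally Lipschitz on $\bR^n\setminus\{0\}$. To close the argument I would show that $\hat\ell(x)\to 0$ as $x\to 0$ with the convention $\hat\ell(0):=0$, so that $f$ is continuous with $f(0)=0$, and solutions exist on $[0,\infty)$; this is enough to apply the Lyapunov argument globally, even though $f$ may fail to be Lipschitz at $0$. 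Everything else is routine.
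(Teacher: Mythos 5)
Your proposal is correct and follows essentially the same route as the paper: conditions 1)--3) of Definition~\ref{stab:dfn} hold by the construction of $V$, and the case analysis on $\hat\ell$ yields exactly the inequality $L_{f+g\alpha}V \le -W$ that the paper derives in \eqref{Lyap_clsys}, with the same quadratic-sandwich argument for GES. Your explicit attention to the regularity of $f$ at the origin is in fact slightly more careful than the paper, which only gestures at this point by reference to the autonomous case.
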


\begin{proof}
As mentioned above, $V$ satisfies  items 1) -- 3) of Definition \ref{stab:dfn}.
By a similar reasoning mentioned in the previous subsection, 
$f + g \alpha$ is locally Lipschitz continuous on $\bR^n \setminus \{0\}$ and satisfies $f(0) + g(0) \alpha (0)= 0$.

Next, it follows from \eqref{learn_CLF} that
\begin{align}\label{Lyap_clsys}
&L_{f + g \alpha} V(x) \nonumber\\
&= L_{\hat f + g \alpha} V(x) + L_{\hat \ell} V(x) \nonumber\\
&=L_{\hat f + g \alpha} V(x) \nonumber\\
&\quad + 
\left\{\begin{array}{ll}
0 & \mbox{if } L_{\hat f + g \alpha} V(x)  \le - W(x) \\
- (L_{\hat f + g \alpha} V(x) + W(x)) & \mbox{otherwise}
\end{array}\right. \nonumber\\
&\le - W(x), 
\quad
\forall x \in \bR^n.
\end{align}
Therefore, the system $\dot x = f(x) + g(x) \alpha (x)$ is GAS at the origin.
Finally, the origin is GES if $W(x) = c_3 V(x) $, $c_3 >0$, 
since there exist $c_1, c_2 >0$ such that $c_1 \| x \|^2 \le V(x) \le c_2 \| x \|^2$ as mentioned above.
\end{proof}

\begin{rem}
If one only requires the GAS of the closed-loop system, 
the activate functions $\sigma_i$, $i=0,1,\dots, k$ are not needed to be smooth ReLU functions \eqref{sReLU}.
Because of the term $\varepsilon \| x\|^2$ in \eqref{V}, $V$ satisfies  items 1) -- 3) of Definition \ref{stab:dfn} if
$\sigma_i$, $i=0,1,\dots, k$ are continuously differentiable, and $\sigma_k$ is positive semi-definite. 
Moreover, $\sigma_i$, $i=0,1,\dots, k-1$ can be selected as vector-valued functions, which is also true in the GES case.
\red
\end{rem}

One notices that the learning formula \eqref{learn_CLF} of $f$ does not depend on $u$.
Therefore, training data of $f$ can be generated from the trajectory of $\dot x = f(x)$.
In other words, we only have to choose  $u^{(i)}=0$, $i=1,\dots, N_d$ in Problem \ref{main:prob}
and to employ the loss function \eqref{loss}.
The proposed learning algorithm is summarized in Algorithm \ref{alg:CLF} below, 
where we again use the following compact description of \eqref{learn_CLF}:
\begin{align}\label{learn_CLF2}
f(x) := \hat f(x)  - \frac{\mbox{ReLU}( L_{\hat f + g \alpha} V(x) + W(x))}{\| \nabla V (x)\|^2} \nabla V(x).
\end{align}

\begin{algorithm}[htbp]
  \caption{Learning Deep Stabilizing Controllers}
  \label{alg:CLF}
  \begin{algorithmic}
    \Require{$g$, $W$, $\varepsilon > 0$, $\theta \in (0, 1)$, $\{x^{(i)}, \dot{x}^{(i)}\}_{i=1}^{N_d}$}
    \Ensure{$f$, $V$, and $\alpha$}
    \Initialize{$\hat f:=\hat f\mbox{--NN}_{w_{\hat f}, v_{\hat f}, b_{\hat f}}$\\
    $V:= V\mbox{--NN}_{w_V, v_V, b_V}$\\
    $\alpha:= \alpha\mbox{--NN}_{w_\alpha, v_\alpha, b_\alpha}$\\
    $f:= \hat f - \frac{\mathrm{ReLU}(L_{\hat f + g \alpha} V + W)}{\| \nabla V\|^2} \nabla V$}
    \Repeat
        \State select $\Delta \subset \{1,\dots,N_d \}$
        \State $L  \leftarrow \frac{1}{n |\Delta |} \sum_{i\in\Delta} \| \dot x^{(i)} - f (x^{(i)}) \|^2$
        \State  $w_i \leftarrow w_i + \theta (\partial L/\partial w_i)^\top$, $i=\hat f, V, \alpha$
        \State $v_i \leftarrow v_i + \theta (\partial L/\partial v_i)^\top$, $i=\hat f, V, \alpha$
        \State $b_i \leftarrow b_i + \theta (\partial L/\partial b_i)^\top$, $i=\hat f, V, \alpha$
    \Until convergence
    \State return $f$, $V$, and $\alpha$
  \end{algorithmic}
\end{algorithm}

At the end of this subsection, 
we take the learning error of $f$ into account. 
Let $f_\mathrm{true}$ denote the true drift vector field, where $f$ is the one learned by Algorithm \ref{alg:CLF}.
As expected, if the learning error $\| f_\mathrm{true} (x)  - f(x) \|$ is small,
then the learned controller $u = \alpha (x)$ stabilizes the true system $\dot x = f_\mathrm{true} (x) + g (x) u$ also, stated below.
\begin{thm}\label{stab:thm}
Let us use the same notations as Theorem \ref{thm:CLF}, and 
let $\Omega_c$ denote the level set of $V(x)$, i.e., 
$\Omega_c:= \{x \in \bR^n : V(x) \le c \}$.
Also, define the following set $D \subset \bR^n$:
\begin{align*}
D := \biggl\{x \in \bR^n:  &\; \| f_\mathrm{true}(x) -  f(x)\| \\ 
&< \frac{W(x)}{2 \varepsilon \| x\| + \sum_{i=0}^{k-1} \prod_{j=i+1}^{k-1} \|v_j\|_2 \| w_i \|_2} \biggr\}.
\end{align*}
If there exists $c > 0$ such that $\Omega_c \subset (D \cup \{0\})$, then $\Omega_c$ is a region of attraction for 
the true closed-loop system $\dot x =  f_\mathrm{true} (x) + g(x) \alpha (x)$.
\end{thm}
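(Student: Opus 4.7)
The plan is to show that the learned Lyapunov function $V$ continues to be a Lyapunov function of the true closed-loop system $\dot x = f_\mathrm{true}(x) + g(x)\alpha(x)$ on the sublevel set $\Omega_c$, and then invoke the standard region-of-attraction argument. The starting observation is that the Lie derivative along the true closed-loop dynamics can be decomposed as a perturbation of the one analyzed in Theorem~\ref{thm:CLF}:
\begin{align*}
L_{f_\mathrm{true} + g\alpha} V(x)
&= L_{f + g\alpha} V(x) + \frac{\partial V}{\partial x}(x)\bigl(f_\mathrm{true}(x) - f(x)\bigr) \\
&\le -W(x) + \|\nabla V(x)\|\,\|f_\mathrm{true}(x) - f(x)\|,
\end{align*}
where the inequality uses the bound $L_{f+g\alpha}V \le -W$ proved inside Theorem~\ref{thm:CLF} together with the Cauchy--Schwarz inequality.

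The key step is to bound $\|\nabla V(x)\|$ by exactly the denominator appearing in the definition of $D$. Since $V(x) = \sigma_k(\gamma(x)-\gamma(0)) + \varepsilon\|x\|^2$, the triangle inequality combined with $|\sigma_k'(y)| \le 1$ for all $y \in \bR$ (immediate from \eqref{sReLU}) yields $\|\nabla V(x)\| \le \|\nabla\gamma(x)\| + 2\varepsilon\|x\|$. To estimate $\nabla\gamma$, I would unroll the ICNN recursion in \eqref{ICNN}, differentiating $z_{i+1} = \sigma_i(v_i z_i + w_i x + b_i)$ to obtain $\partial z_{i+1}/\partial x = \sigma_i'(\cdot)\bigl(v_i\,\partial z_i/\partial x + w_i^\top\bigr)$; iterating this relation and applying $|\sigma_i'| \le 1$ together with submultiplicativity of $\|\cdot\|_2$ gives
\begin{equation*}
\|\nabla\gamma(x)\| \le \sum_{i=0}^{k-1} \Bigl(\prod_{j=i+1}^{k-1} \|v_j\|_2\Bigr) \|w_i\|_2,
\end{equation*}
with the convention that the empty product (for $i=k-1$) equals one. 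Substituting this into the bound on $\|\nabla V(x)\|$ produces exactly the denominator that defines $D$.

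Combining the two estimates, for every $x \in D$ the defining inequality of $D$ immediately yields $L_{f_\mathrm{true}+g\alpha}V(x) < -W(x) + W(x) = 0$. Since $\Omega_c \subset D \cup \{0\}$ by hypothesis, and since at the origin one has $\nabla V(0) = 0$ so that $L_{f_\mathrm{true}+g\alpha}V(0) = 0$, we obtain $\dot V(x) < 0$ on $\Omega_c \setminus \{0\}$. Because $V$ is continuous, positive definite, and radially unbounded, $\Omega_c$ is compact; together with $\dot V < 0$ on the boundary it is positively invariant, and the standard Lyapunov theorem then forces every trajectory starting in $\Omega_c$ to converge to the origin, which is the claimed region-of-attraction property.

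The main obstacle will be the gradient bound on the ICNN: one must track the chain rule carefully through all $k$ layers of \eqref{ICNN}, verify that the smooth-ReLU activation \eqref{sReLU} is globally $1$-Lipschitz, and match the resulting nested sum to the exact ordering of $\|v_j\|_2$ and $\|w_i\|_2$ in the denominator of $D$. Once this Lipschitz-type estimate on $V$ is in place, everything else reduces to a direct application of Lyapunov's theorem on the compact sublevel set $\Omega_c$.
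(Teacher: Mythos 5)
Your proposal is correct and follows essentially the same route as the paper's own proof: decompose $L_{f_\mathrm{true}+g\alpha}V$ as $L_{f+g\alpha}V + L_{f_\mathrm{true}-f}V$, invoke the bound $L_{f+g\alpha}V\le -W$ from Theorem~\ref{thm:CLF}, apply Cauchy--Schwarz with the layer-wise gradient estimate $\|\nabla V(x)\|\le 2\varepsilon\|x\|+\sum_{i=0}^{k-1}\prod_{j=i+1}^{k-1}\|v_j\|_2\|w_i\|_2$, and conclude via the standard invariant-sublevel-set argument. You in fact supply more detail than the paper on the ICNN gradient bound (which the paper only asserts), and that derivation is sound given the global $1$-Lipschitzness of the smooth ReLU activations.
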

\begin{proof}
It follows form Theorem \ref{thm:CLF} that
\begin{align*}
L_{ f_\mathrm{true} + g \alpha } V(x)
&= L_{f + g \alpha } V(x) + L_{f_\mathrm{true} - f} V(x) \\
&\le - W(x) + L_{f_\mathrm{true} - f} V(x) .
\end{align*}
Form the representation \eqref{V} -- \eqref{sReLU} of $V(x)$, it is possible to show
\begin{align*}
\| \nabla V (x) \| \le 2 \varepsilon \| x\| + \sum_{i=0}^{k-1} \prod_{j=i+1}^{k-1} \|v_j\|_2 \| w_i \|_2,
\quad
\forall x \in \bR^n.
\end{align*}
This yields
\begin{align*}
&L_{f_\mathrm{true} - f} V(x) \\
&\le \| f_\mathrm{true}(x) -  f(x) \| 
\left(2 \varepsilon \| x\| +\sum_{i=0}^{k-1} \prod_{j=i+1}^{k-1} \|v_j\|_2 \| w_i \|_2\right).
\end{align*}
Therefore, $L_{ f_\mathrm{true} + g \alpha } V(x) < 0$ on $D$.
The statement of the theorem follows from the fact that $\Omega_c$ is the level set. 
\end{proof}

\subsection{Applications to $H_\infty$-control}
In Algorithm \ref{alg:CLF}, there are freedoms for structures $W$ and $\alpha$.
Utilizing them, one can impose some control performances in addition to the closed-loop stability.
To illustrate this, we apply Algorithm \ref{alg:CLF} to designing an $H_\infty$-controller.

Consider the following system:
\begin{align}\label{plant}
\left\{\begin{array}{l}
\dot x = f(x) + g(x) u + g_d (x) d\\
z = h(x),
\end{array}\right.
\end{align}
where $d \in \bR^p$ and $z\in \bR^q$ denote the disturbance and performance output, respectively.
The functions $g_d:\bR^n \to \bR^{n \times p}$ and $h:\bR^n \to \bR^q$ are locally Lipschitz continuous, and 
$h(0) = 0$. 

We consider designing a feedback controller $u = \alpha (x)$ such that for a given $\gamma > 0$,
the closed-loop system satisfies 
\begin{align}\label{L2gain}
\int_0^\infty \| y(t) \|^2 dt \le \gamma^2 \int_0^\infty \| d(t) \|^2 dt,
\end{align}
when $x(0)=0$.
According to \cite{krstic1998stabilization}, this $H_\infty$-control problem is solvable if there exists 
a continuously differentiable positive definite function $V:\bR^n \to \bR_+$ such that
\begin{align*}
L_{f + g \alpha} V(x)  \le - \| h(x) \|^2 - \frac{4\| L_{g_d} V(x) \|^2}{\gamma^2}.
\end{align*}
Then, from \eqref{Lyap_clsys}, one only has to choose $W(x)$ in Algorithm~\ref{alg:CLF} such that
\begin{align}\label{W_Hinf}
W(x) \ge  \| h(x) \|^2 + \frac{4\| L_{g_d} V(x) \|^2}{\gamma^2}.
\end{align}
If $W(x)$ is positive definite, the closed-loop stability is also guaranteed when $d=0$. 

\subsection{Applications to Hamilton-Jacobi inequalities}
As a byproduct of Algorithm \ref{alg:CLF}, a Lyapunov function $V(x)$ of 
the closed-loop system $\dot x = f(x) + g(x) \alpha (x)$ is also learned.
We apply this fact for solving the following Hamilton-Jacobi inequality (HJI):
\begin{align}\label{HJI}
H(x) &:= L_f V(x) - \frac{1}{2} L_g V(x) R^{-1}(x) L_g^\top V(x) + W(x) \nonumber\\
& \le 0,
\quad
\forall x \in \bR^n
\end{align}
with respect to $V:\bR^n \to \bR_+$ for given $W:\bR^n \to \bR_+$ and $R:\bR^n \to \bR^{m \times m}$,
where $R$ is symmetric and positive definite for all $x \in \bR^n$.

One can solve the HJI by specifying the structure of~$\alpha$ in Algorithm~\ref{alg:CLF} into
\begin{align}\label{opt_alpha}
\alpha (x) := - \frac{1}{2} R^{-1}(x) L_g^\top V(x).
\end{align}
Indeed, it follows from \eqref{Lyap_clsys} and \eqref{opt_alpha} that
\begin{align*}
&L_f V(x) - \frac{1}{2} L_g V(x) R^{-1}(x) L_g^\top V(x) \\
&= L_{f + g \alpha} V(x) \le - W(x).
\end{align*}

From the above, one may also notice that the Hamilton-Jacobi equation (HJE), $H(x) = 0$,
can be solved approximately by making $\| H (x) \|$ small.
This, for instance, can be done by replacing the loss function~\eqref{loss} with
\begin{align}\label{opt_loss}
L =  \frac{1}{n |\Delta|} \sum_{i\in\Delta} & \left(  \| \dot x^{(i)} - f (x^{(i)}) \|^2  + a \| H (x^{(i)}) \|^2 \right),
\end{align}
where $a > 0$ is the weight.
From standard arguments of optimal control \cite{krstic1998stabilization}, this further implies that
$u = 2 \alpha (x)$ for $\alpha$ in \eqref{opt_alpha} is an approximate solution to
the following optimal control problem:
\begin{align}\label{opt_con}
&\inf_{u} \int_0^\infty W(x(t)) + \frac{1}{2} u^\top (t) R(x(t)) u(t) dt\\
&\mbox{subject to } \dot x = f (x)+ g(x) u.
\nonumber
\end{align}
That is, Algorithm \ref{alg:CLF} can also be employed for solving the optimal control problem \eqref{opt_con} approximately.

\subsection{Revisiting learning stable autonomous dynamics}
In the previous subsection, we have established a bridge between Algorithm~\ref{alg:CLF} and optimal control. 
In fact, an optimal control perspective gives 
a new look at the formula \eqref{learn_sys} for learning stable autonomous dynamics.

Inspired by inverse optimal control \cite{krstic1998stabilization}, we relate the formula \eqref{learn_sys} with an HJI \eqref{HJI}.

\begin{thm}\label{invopt:thm}
For arbitrary $\hat f$ of locally Lipschitz on $\bR^n \setminus \{0\}$ and $V$ of class $C^1$, it follows that
\begin{align*}
L_{\hat f} V(x) - \frac{ \| \nabla V(x) \|^2}{2r(x)} \le - c_3 V(x),
\quad
\forall x \in \bR^n,
\end{align*}
for 
\begin{align}\label{invopt_r}
r:=
\left\{\begin{array}{ll}
b & \mbox{if } L_{\hat f} V  \le - c_3 V \\
\frac{\| \nabla V\|^2 }{2(L_{\hat f} V + c_3 V)}  & \mbox{otherwise}
\end{array}\right.
\end{align}
where $b>0$ is arbitrary.
Moreover, $\hat k$ in \eqref{learn_sys} satisfies $\hat k = - \nabla V/(2 r)$ when $b \to \infty$.
\end{thm}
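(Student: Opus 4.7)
The plan is a direct case analysis following the two-piece definition of $r$ in \eqref{invopt_r}, since both sides of the inequality and the formula for $\hat{k}$ are split along the same condition $L_{\hat f} V \le -c_3 V$. No limiting or approximation argument is needed for the inequality itself; the only place where a limit enters is the last sentence about recovering $\hat{k}$ as $b\to\infty$.

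First, on the set where $L_{\hat f} V(x) \le -c_3 V(x)$, we have $r(x)=b>0$, so the term $-\|\nabla V(x)\|^2/(2r(x)) = -\|\nabla V(x)\|^2/(2b)$ is non-positive. Hence
\begin{align*}
L_{\hat f} V(x) - \frac{\|\nabla V(x)\|^2}{2r(x)} \le L_{\hat f} V(x) \le -c_3 V(x),
\end{align*}
which handles this branch for every $b>0$. Next, on the set where $L_{\hat f} V(x) > -c_3 V(x)$, the quantity $L_{\hat f} V(x) + c_3 V(x)$ is strictly positive, so $r(x)$ in the second branch of \eqref{invopt_r} is well-defined and positive. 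Substituting this $r(x)$ gives $\|\nabla V(x)\|^2/(2r(x)) = L_{\hat f} V(x) + c_3 V(x)$, and therefore
\begin{align*}
L_{\hat f} V(x) - \frac{\|\nabla V(x)\|^2}{2r(x)} = -c_3 V(x),
\end{align*}
so the inequality in fact holds with equality on this branch. Combining the two cases yields the claim for all $x \in \mathbb{R}^n$ (note that the singular point $x=0$ is harmless because $V(0)=\nabla V(0)=0$ and $\hat f(0)=0$, so both sides vanish).

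For the second statement, I would simply compare $-\nabla V(x)/(2r(x))$ with the formula \eqref{learn_sys} for $\hat k$, again branch by branch. On $\{L_{\hat f} V \le -c_3 V\}$, $-\nabla V(x)/(2r(x)) = -\nabla V(x)/(2b) \to 0 = \hat k(x)$ as $b\to\infty$; on the complement, $r(x)$ does not depend on $b$, and the algebraic simplification already performed above gives
\begin{align*}
-\frac{\nabla V(x)}{2r(x)} = -\frac{L_{\hat f} V(x) + c_3 V(x)}{\|\nabla V(x)\|^2}\,\nabla V(x) = \hat k(x),
\end{align*}
which coincides with $\hat k$ exactly for every $b$. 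The only potential obstacle is cosmetic, namely handling $x=0$ where $\nabla V$ vanishes and the second branch of $r$ would be of the form $0/0$; this is resolved by the same convention used for \eqref{learn_sys2}, since the entire expression is continuously extended by zero at the origin thanks to the local Lipschitz regularity and the vanishing of $\hat f(0)$ and $\nabla V(0)$.
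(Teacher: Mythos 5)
Your proposal is correct and follows essentially the same route as the paper's own proof: a direct branch-by-branch substitution of \eqref{invopt_r} into the left-hand side (yielding $\le L_{\hat f}V \le -c_3 V$ on the first branch and exact equality with $-c_3 V$ on the second), followed by the same two-case comparison of $-\nabla V/(2r)$ with $\hat k$ as $b\to\infty$. Your additional remarks about the behaviour at the origin are a harmless refinement that the paper leaves implicit.
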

\begin{proof}
It follows from \eqref{invopt_r} that
\begin{align*}
&L_{\hat f} V(x) - \frac{ \| \nabla V(x) \|^2}{2r(x)} \\
&= L_{\hat f} V(x) 
 - \left\{\begin{array}{ll}
\|  \nabla V(x)\|^2/(2 b) & \mbox{if } L_{\hat f} V(x)  \le - c_3 V(x) \\
L_{\hat f} V(x) + c_3 V(x)  & \mbox{otherwise}
\end{array}\right.\\
&\le - c_3 V(x).
\end{align*}
Next, it holds that
\begin{align*}
- \frac{\nabla V(x)}{2 r(x)}
= \left\{\begin{array}{ll}
\frac{\nabla V(x)}{2 b} & \mbox{if } L_{\hat f} V(x)  \le - c_3 V(x) \\
\frac{L_{\hat f} V(x) + c_3 V(x)}{\| \nabla V(x)\|^2}  & \mbox{otherwise}
\end{array}\right.
\end{align*}
Thus, we have $\hat k = - \nabla V/(2 r)$ when $b \to \infty$.
\end{proof}

The above theorem and discussion in the previous subsection imply that 
when $b \to \infty$, the controller $u^* = 2 \hat k$ with $\hat k$ in \eqref{learn_sys} is an optimal controller of
\begin{align*}
&\inf_{u} \int_0^\infty q(x(t)) + \frac{\| u(t) \|^2}{2 r(x(t))}  dt\\
&\mbox{subject to } \dot x = \hat f (x) +  u
\end{align*}
for $r$ in \eqref{invopt_r} and $q$ defined by
\begin{align*}
q(x) := - L_f V(x) + \frac{\| \nabla V(x) \|^2}{2 r(x) } \ge W(x).
\end{align*}
That is, the learning formula \eqref{learn_sys} is optimal in this sense.
A similar remark holds for the learning formula \eqref{learn_CLF} of 
stabilizing control design.

\section{Examples}\label{ex:sec}
In this section, we illustrate Algorithm \ref{alg:CLF}.
As system dynamics, we consider the following van der Pol oscillator:
\begin{align}\label{vdP}
\dot x &= f_\mathrm{true} (x) + g u\\
&f_\mathrm{true} (x) 
:= 
\begin{bmatrix}
x_2\\
 -x_1 + 0.3 (1 -  x_2^2) x_2 
 \end{bmatrix}, \;
g 
= 
\begin{bmatrix}
0 \\ 1
\end{bmatrix}.
\nonumber
\end{align}
When $u = 0$, this system has the stable limit cycle, and thus the origin is unstable.
If the drift vector field~$f_\mathrm{true}$ is known, this system is stabilizable by specifying $\dot x_2$.

For stabilizing control design, training data points $\{x^{(i)}\}_{i=1}^{N_d}$ are equally distributed on $[-3, 3] \times [-3, 3]$,
and the number of training data is $N_d = 10000$.
We choose $\varepsilon$ of the Lyapunov candidate~\eqref{V} as $\varepsilon = 10$, and $W(x)$ in \eqref{learn_CLF} as $W(x) = 500 \| x\|^2$.
A parameter $\theta$ in Algorithm~\ref{alg:CLF} is selected as $\theta = 0.005$.
For optimization, the adaptive moment estimation (Adam) is employed.

Figure~\ref{im:f} shows the phase portrait of the learned dynamics $\dot x = f(x)$.
As shown in Fig.~\ref{im:LC}, the learned dynamics have a stable limit cycle.
Thus, Algorithm~\ref{alg:CLF} preserves a topological property of the true dynamics.
Also, we plot the learned Lyapunov function $V(x)$ and controller $u=\alpha(x)$ in Figs.~\ref{im:V} and~\ref{im:alpha}, respectively.
It can be confirmed that $V(x)$ is positive definite.
This and $W(x) = 500 \| x\|^2$ imply that the learned controller $u=\alpha(x)$ is a stabilizing controller
for the learned drift vector field $f(x)$.
Since $V(x)$ is a CLF,  the Sontag-type controller~\eqref{eq:Sontag} can also be constructed, which is plotted in Fig.~\ref{im:Sontag}.

As confirmed by Fig.~\ref{im:f_cl}, the learned controller $u=\alpha(x)$ stabilizes the learned dynamics $\dot x = f(x) + g u$.
We also apply this controller to the true dynamics $\dot x =  f_\mathrm{true}(x) + g u$.
According to Fig.~\ref{im:ftrue_cl}, the true system is also stabilized. 
However, the conditions in Theorem~\ref{stab:thm} do not hold at $13$ of $10000$ data points.

\begin{figure}[tb]
    \begin{minipage}{0.45\columnwidth}
    \centering
    \includegraphics[width=42mm]{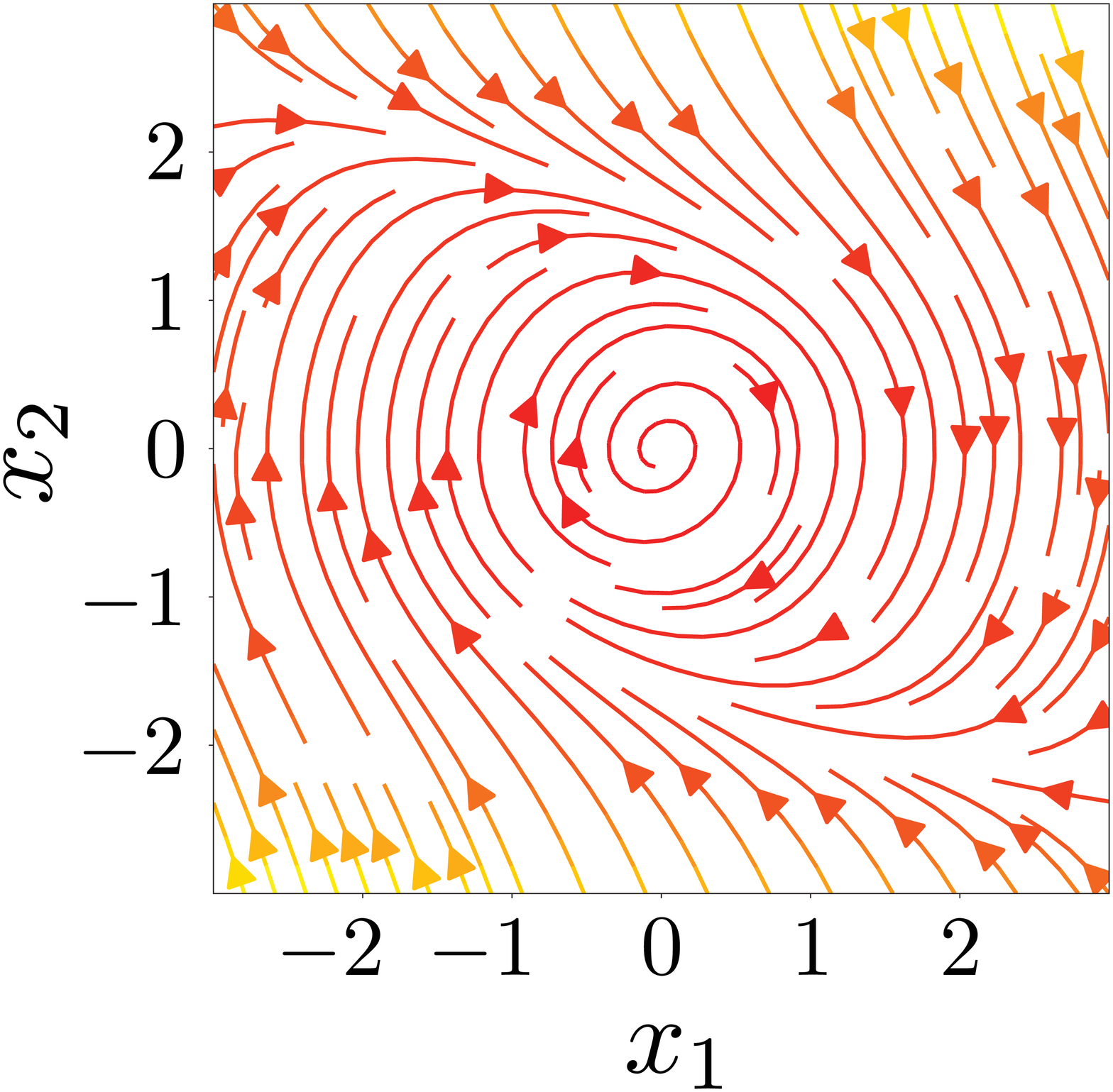}
    \caption{Phase portrait of $\dot x = f(x)$ learned by Algorithm \ref{alg:CLF}}
    \label{im:f}
    \end{minipage}
    \hspace{0.04\columnwidth} 
    \begin{minipage}{0.45\columnwidth}
    \centering
    \includegraphics[width=42mm]{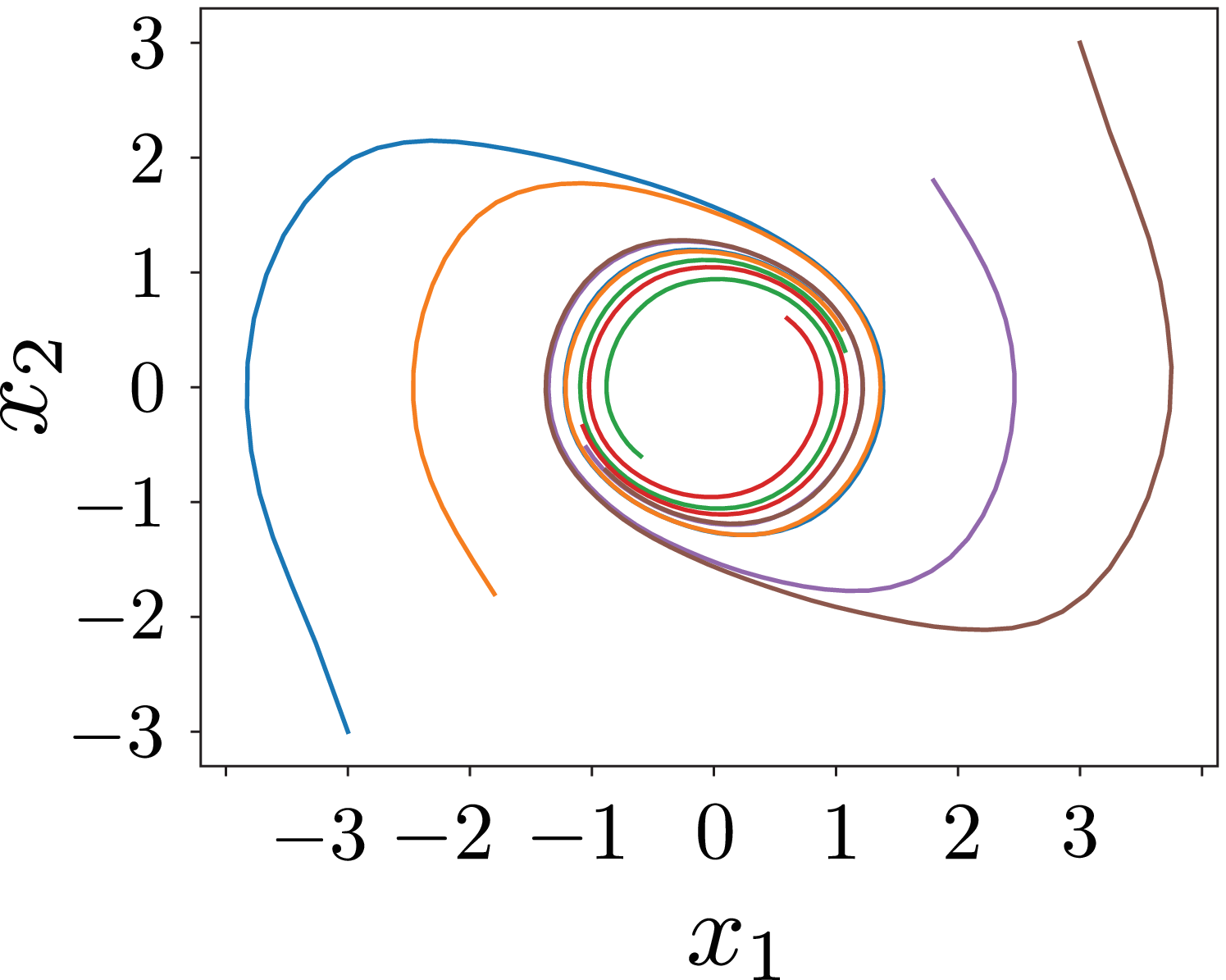}
    \caption{Limit cycle of $\dot x = f(x)$ learned by Algorithm \ref{alg:CLF}}
    \label{im:LC}
    \end{minipage}
\end{figure}

\begin{figure}[tb]
    \begin{minipage}{0.45\columnwidth}
    \centering
    \includegraphics[width=44mm]{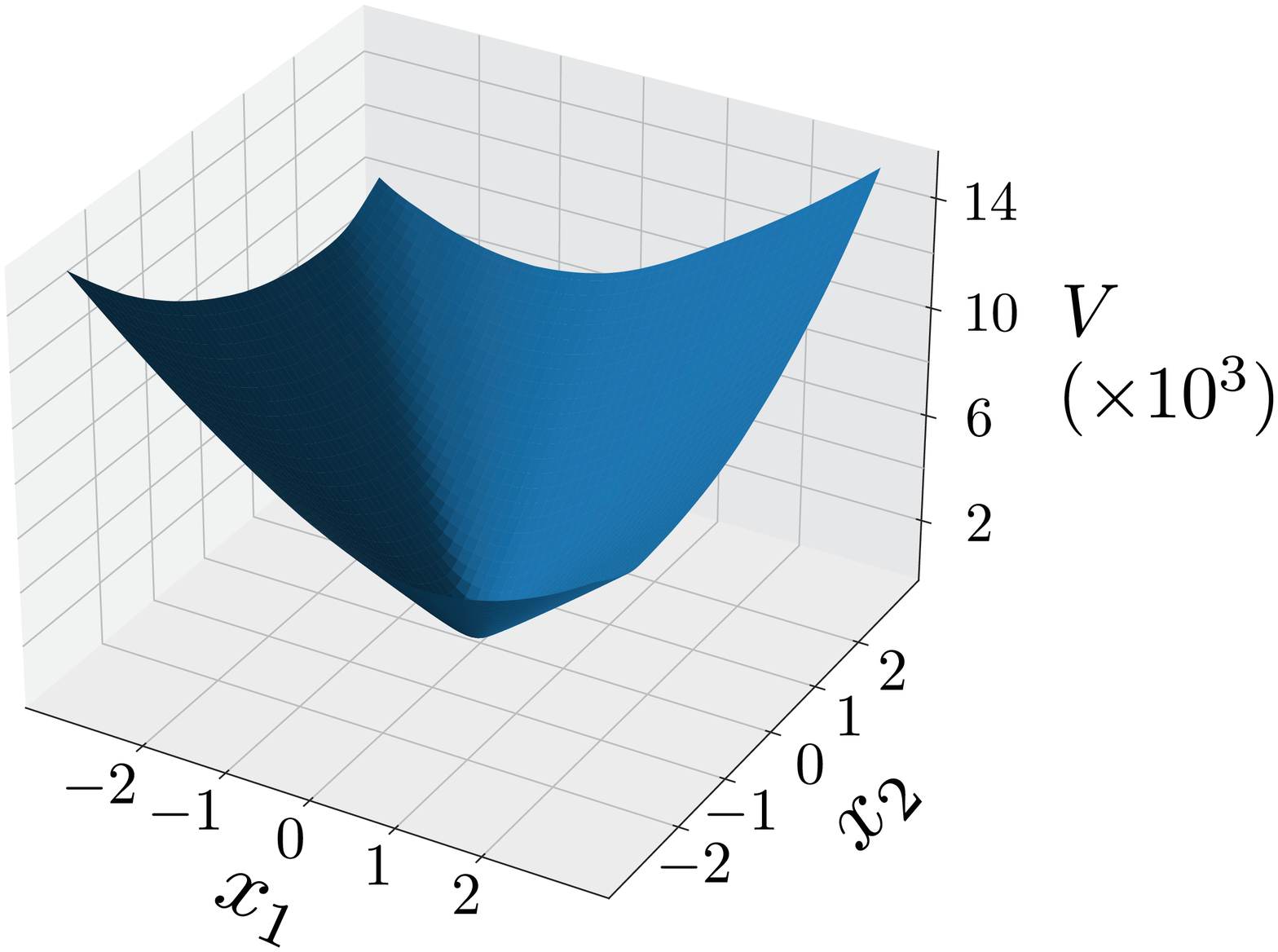}
    \caption{Lyapunov function $V(x)$ learned by Algorithm \ref{alg:CLF}}
    \label{im:V}
    \end{minipage}
    \hspace{0.04\columnwidth} 
    \begin{minipage}{0.45\columnwidth}
    \centering
    \includegraphics[width=38mm]{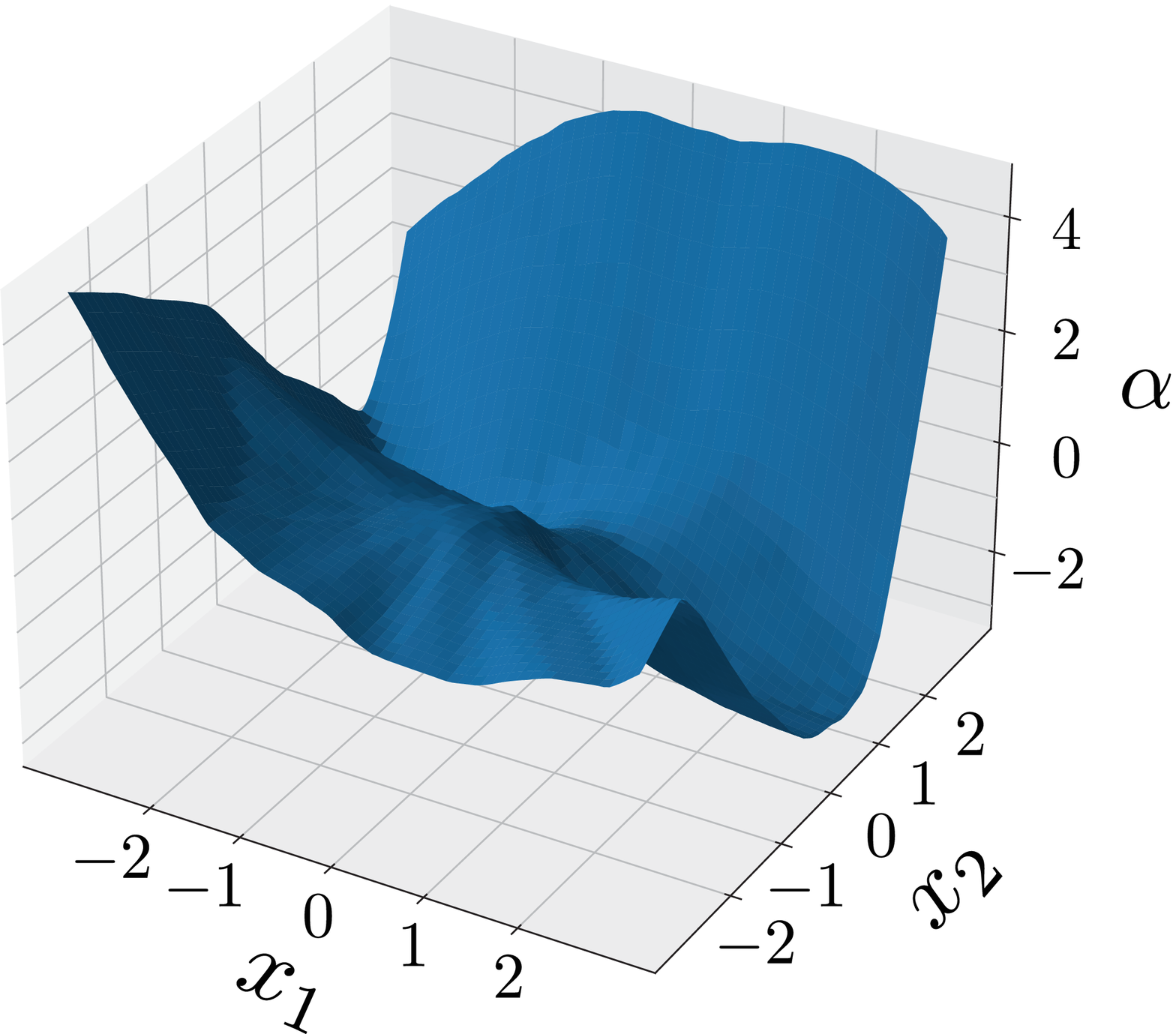}
    \caption{Stabilizing controller $u=\alpha(x)$ learned by Algorithm \ref{alg:CLF}}
    \label{im:alpha}
    \end{minipage}
\\[5mm]
    \centering
    \includegraphics[width=45mm]{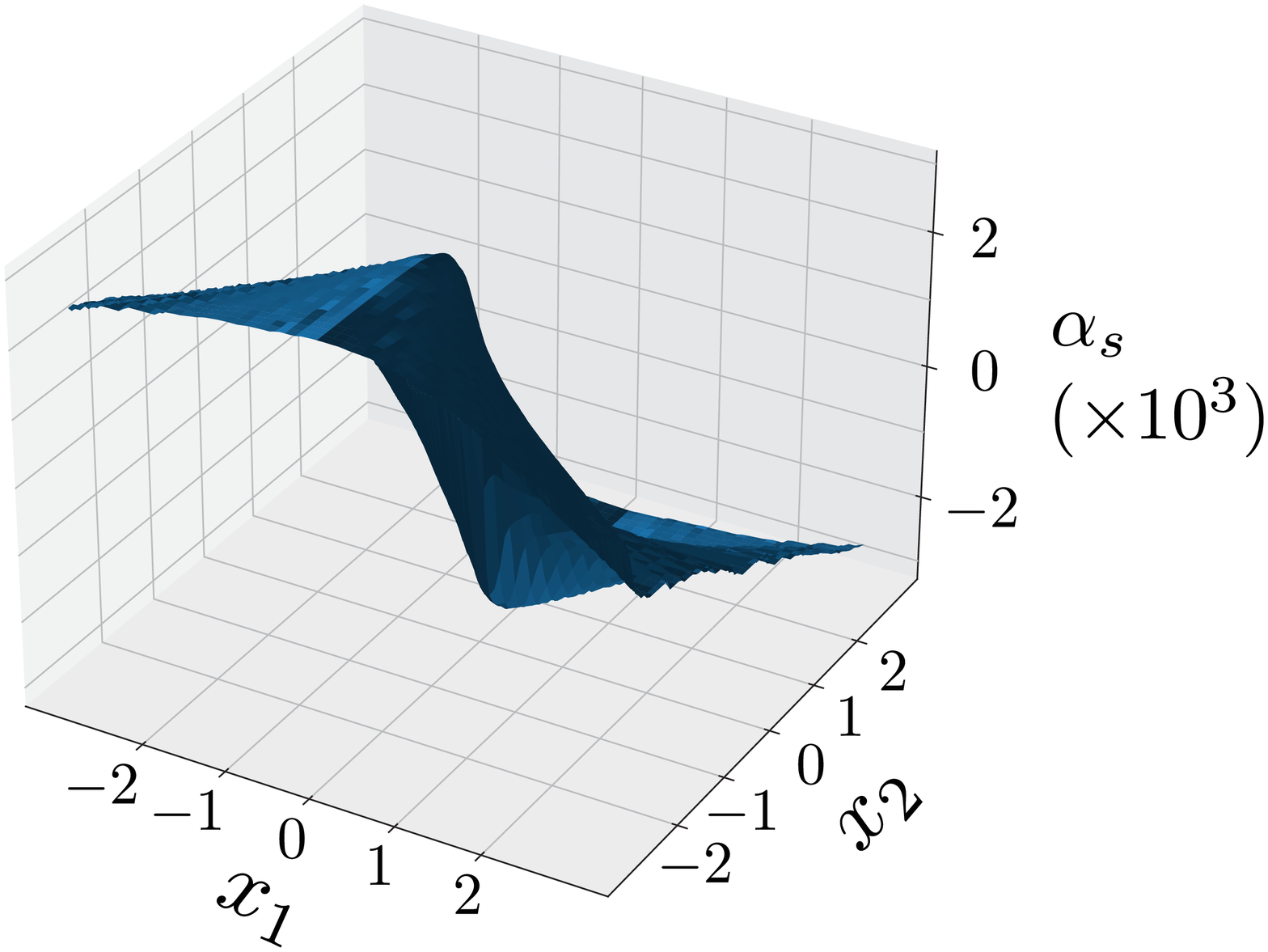}
    \caption{Sontag-type controller for $V(x)$ learned by Algorithm \ref{alg:CLF}}
    \label{im:Sontag}
  \end{figure}

\begin{figure}[tb]
    \begin{minipage}{0.45\columnwidth}
    \centering
    \includegraphics[width=42mm]{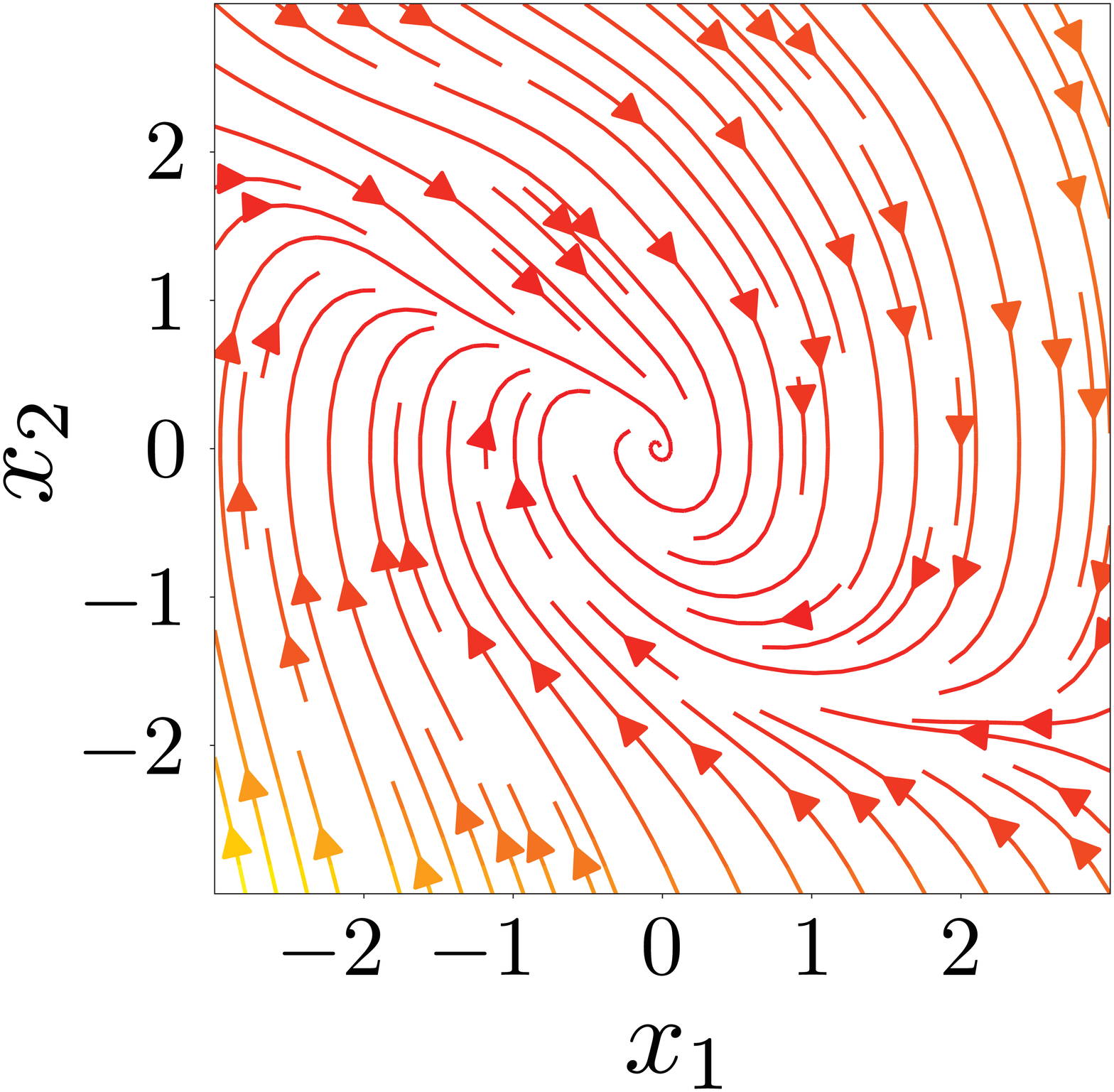}
    \caption{Phase portrait of the closed-loop system: $\dot x = f(x) + g \alpha (x)$}
    \label{im:f_cl}
    \end{minipage}
    \hspace{0.04\columnwidth} 
    \begin{minipage}{0.45\columnwidth}
    \centering
    \includegraphics[width=42mm]{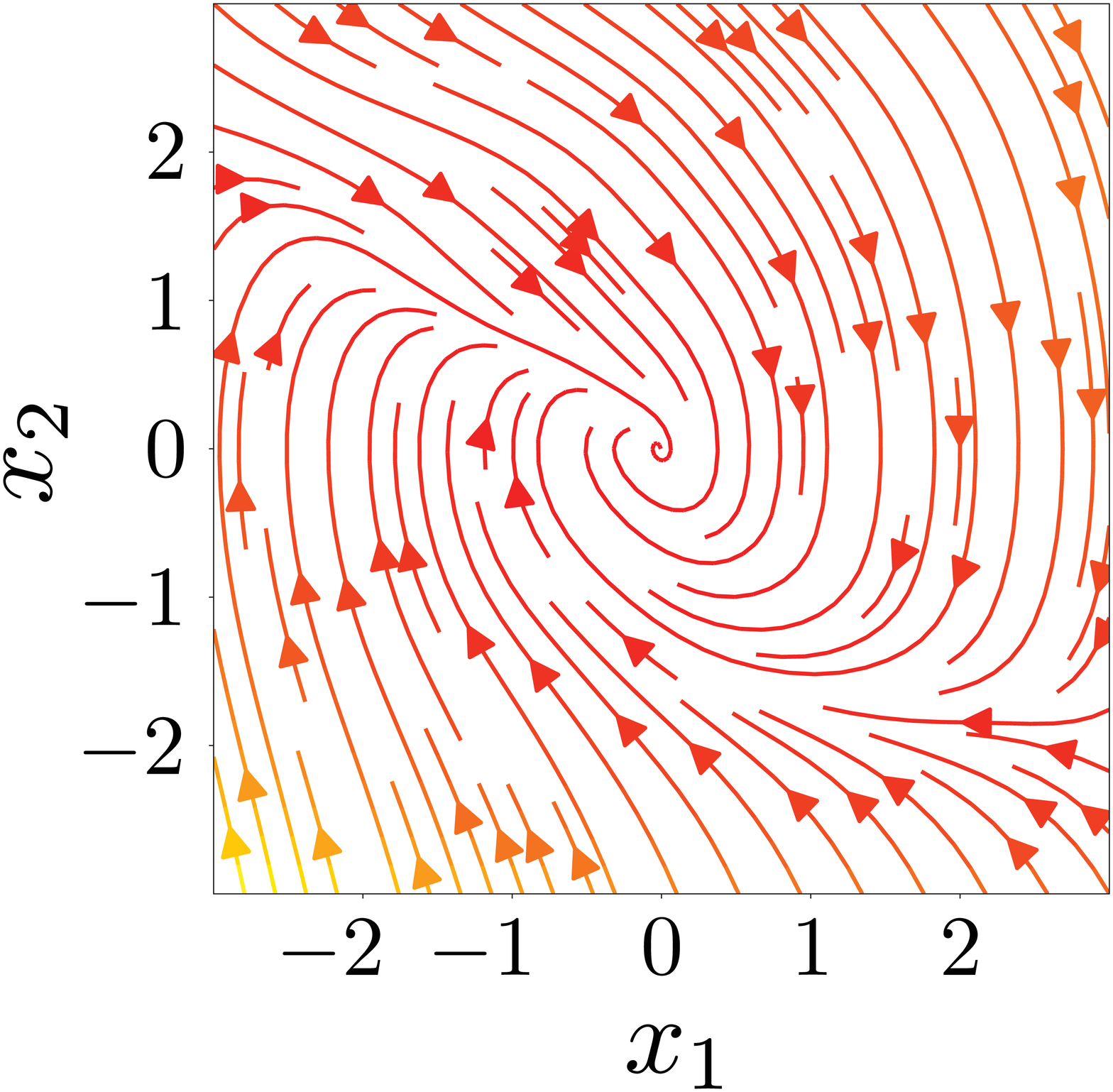}
    \caption{Phase portrait of the true closed-loop system: $\dot x = f_\mathrm{true}(x) + g \alpha (x)$}
    \label{im:ftrue_cl}
    \end{minipage}
\end{figure}

To make the conditions in Theorem~\ref{stab:thm} hold, we select the parameters of Algorithm~\ref{alg:CLF}
as $\varepsilon = 10^{-3}$ and $W(x) = 10^3 \| x\|^2$.
In this case, the learned $f(x)$, $V(x)$, and $\alpha (x)$ by Algorithm~\ref{alg:CLF} satisfy the conditions.
Thus, it is guaranteed that $u= \alpha (x)$ is a stabilizing controller of the true dynamics $\dot x =  f_\mathrm{true}(x) + g u$. 

We plot the phase portrait of the learned dynamics $\dot x = f(x)$ in Fig.~\ref{im:sf} which looks similar to Fig.~\ref{im:f}.
Also, the learned dynamics again preserve the limit cycle as confirmed by Fig.~\ref{im:sLC}.
Therefore, the learned drift vector fields are not sensitive with respect to the parameters of Algorithm~\ref{alg:CLF} at least in this example.
Next, we plot the learned Lyapunov function $V(x)$ and controller $u=\alpha(x)$ in Figs.~\ref{im:sV} and~\ref{im:salpha}, respectively.
The new $\alpha (x)$ in Fig.~\ref{im:salpha} is much larger than the previous one in Fig.~\ref{im:alpha}.
According to Figs.~\ref{im:V} and~\ref{im:sV}, the reason can be
to increase the convergence speed of the $x_2$-direction, 
which can be due to the conservativeness of Theorem~\ref{stab:thm}.
Future work includes to derive a less conservative condition for the closed-loop stability of the true dynamics.

\begin{figure}[b]
    \begin{minipage}{0.45\columnwidth}
    \centering
    \includegraphics[width=42mm]{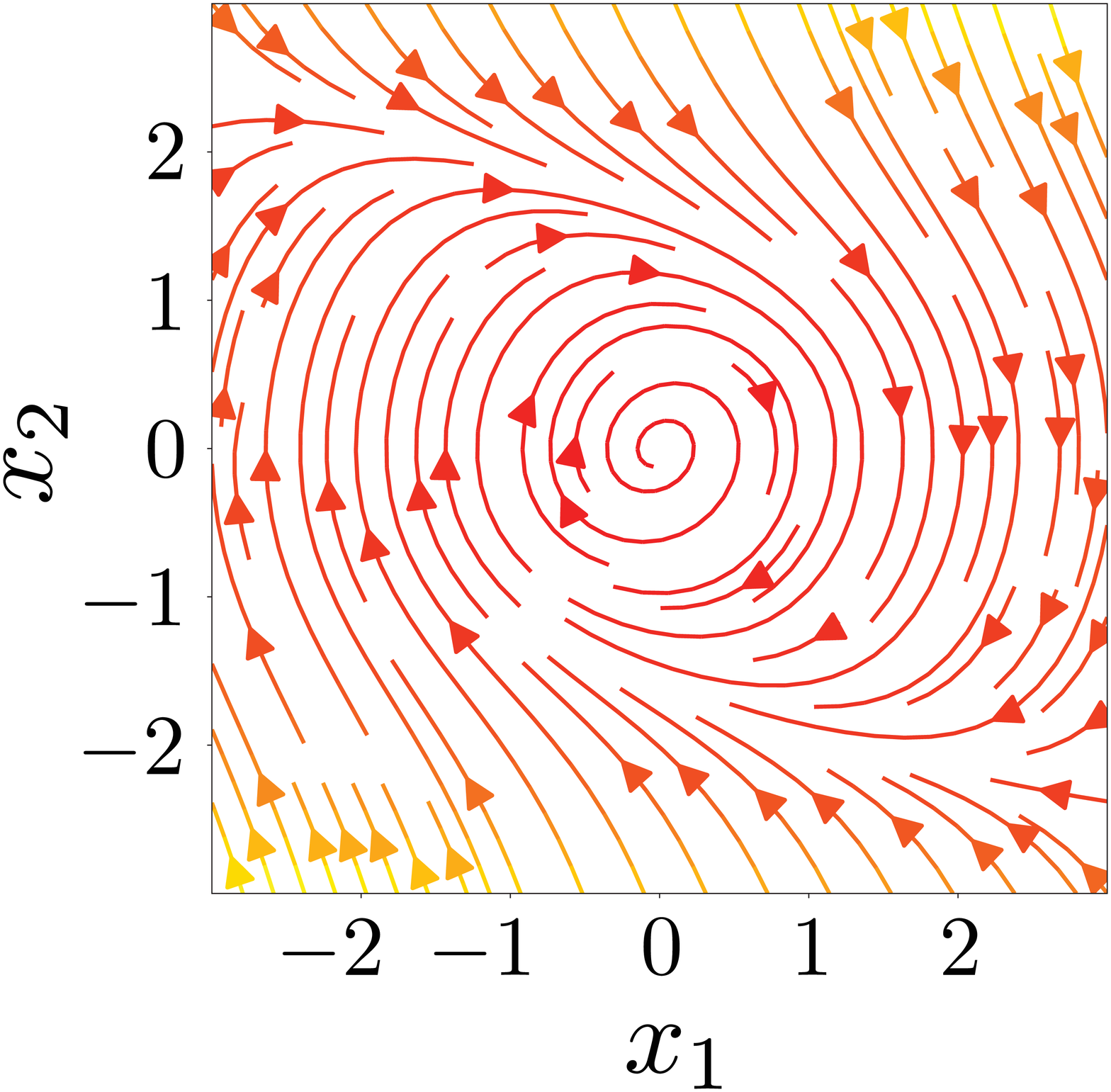}
    \caption{Phase portrait of  $\dot x = f(x)$ learned by Algorithm \ref{alg:CLF} such that the conditions in Theorem \ref{stab:thm} hold}
    \label{im:sf}
    \end{minipage}
    \hspace{0.04\columnwidth} 
    \begin{minipage}{0.45\columnwidth}
    \centering
    \includegraphics[width=42mm]{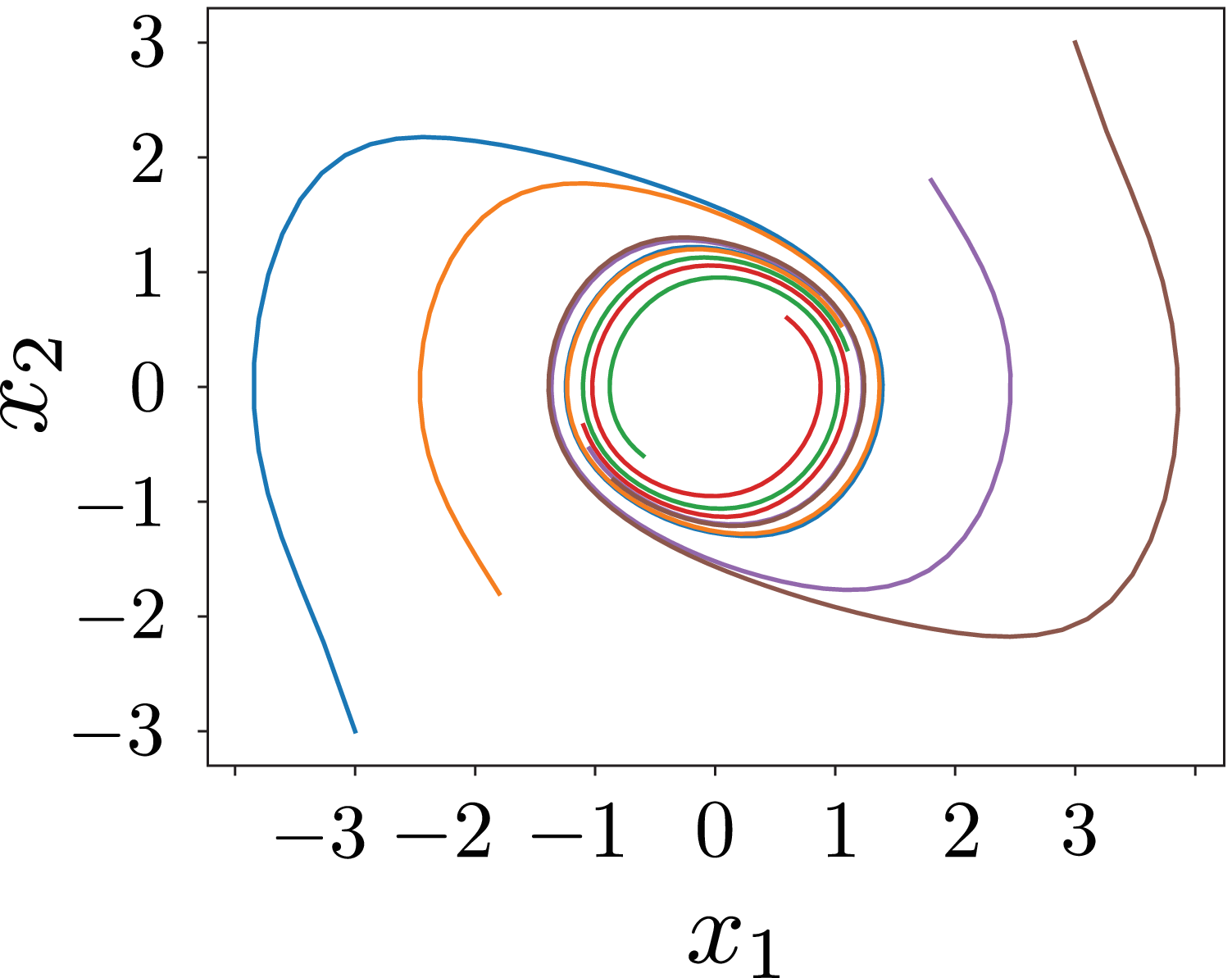}
    \caption{Limit cycle of $\dot x = f(x)$ learned by Algorithm \ref{alg:CLF} such that the conditions in Theorem \ref{stab:thm} hold}
    \label{im:sLC}
    \end{minipage}
\end{figure}

\begin{figure}[tb]
    \begin{minipage}{0.45\columnwidth}
    \centering
    \includegraphics[width=42mm]{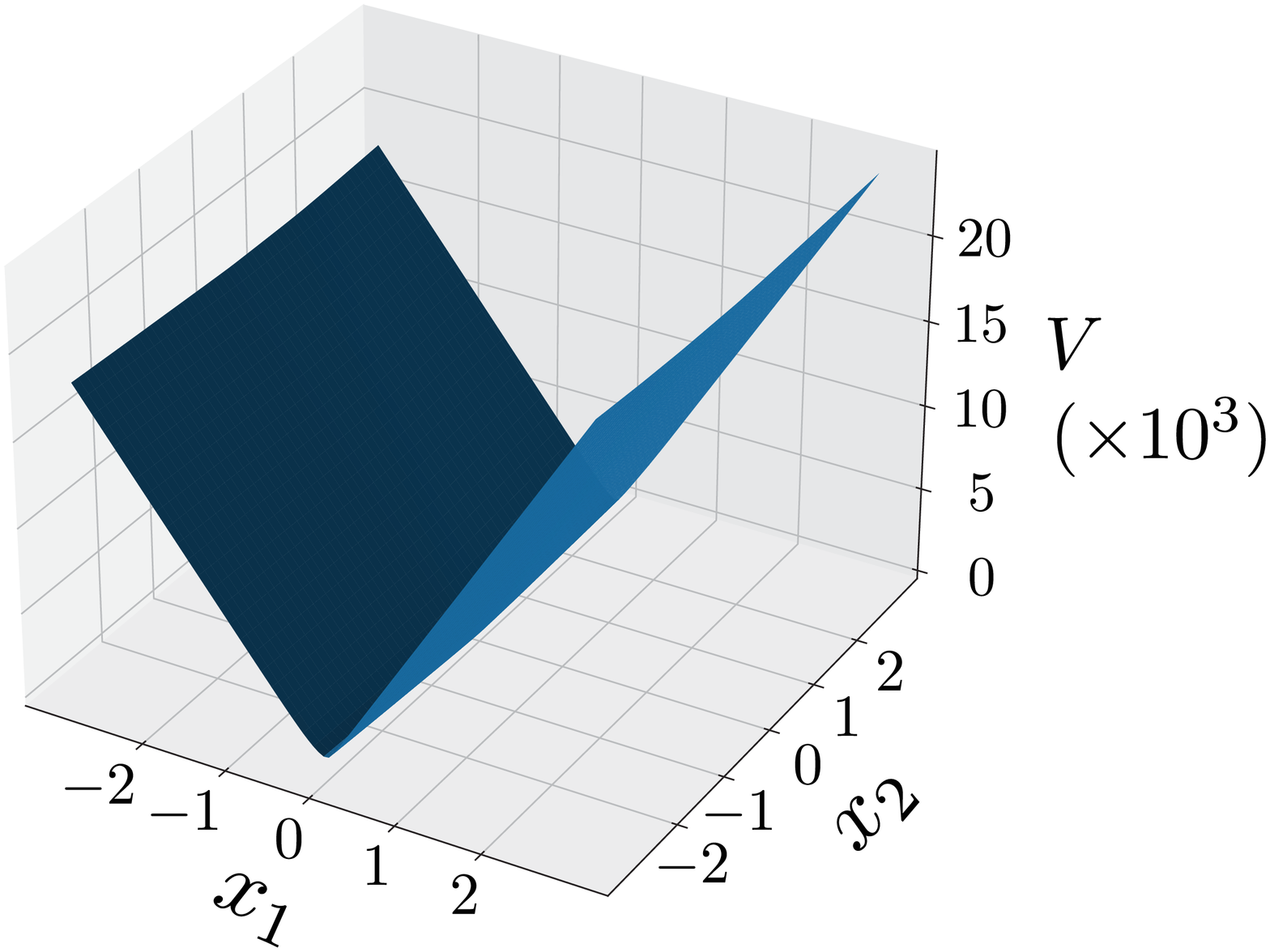}
    \caption{Lyapunov function $V(x)$ learned by Algorithm \ref{alg:CLF} such that the conditions in Theorem \ref{stab:thm} hold}
    \label{im:sV}
    \end{minipage}
    \hspace{0.04\columnwidth} 
    \begin{minipage}{0.45\columnwidth}
    \centering
    \includegraphics[width=42mm]{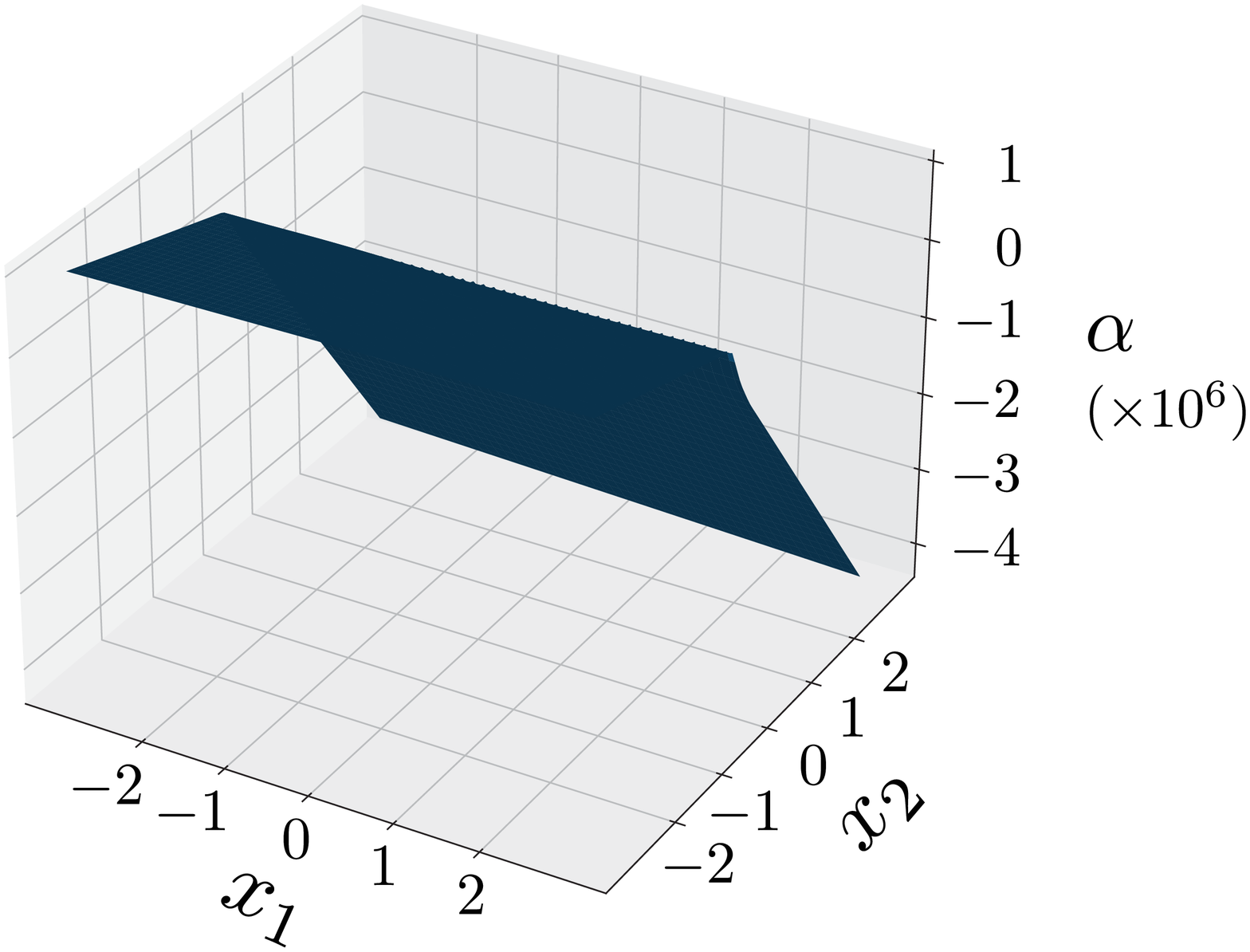}
    \caption{Stabilizing controller $u=\alpha(x)$ learned by Algorithm \ref{alg:CLF} such that the conditions in Theorem \ref{stab:thm} hold}
    \label{im:salpha}
    \end{minipage}
\end{figure}


\section{Conclusion}\label{con:sec}
In this paper, we have developed an algorithm for learning stabilizable dynamics.
We have theoretically guaranteed that the learned dynamics are stabilizable by 
simultaneously learning stabilizing controllers and Lyapunov functions of the closed-loop systems.
It is expected that the proposed algorithm can be applied to various control problems as partly 
illustrated by $H_\infty$-control and optimal control.
Furthermore, the proposed method can be extended to learning dynamics that can be made dissipative with respect to an arbitrary supply rate by control design and to find control barrier functions for safety control, which will be reported in future publication.

\bibliographystyle{IEEEtran} 
\bibliography{CLF_learning}

\end{document}